%%%%%%%%%%%%%%%%%%%%%%%%%%%%%%%%%%%%%%%%%%%%%%%%%%%%%%%%%%%%%%%%%%%%%%%%%%%%%%%%
%2345678901234567890123456789012345678901234567890123456789012345678901234567890
%        1         2         3         4         5         6         7         8

\documentclass[letterpaper, 10 pt, conference]{ieeeconf}  % Comment this line out if you need a4paper

\IEEEoverridecommandlockouts                              % This command is only needed if 
                                                          % you want to use the \thanks command

\overrideIEEEmargins                                      % Needed to meet printer requirements.

%In case you encounter the following error:
%Error 1010 The PDF file may be corrupt (unable to open PDF file) OR
%Error 1000 An error occurred while parsing a contents stream. Unable to analyze the PDF file.
%This is a known problem with pdfLaTeX conversion filter. The file cannot be opened with acrobat reader
%Please use one of the alternatives below to circumvent this error by uncommenting one or the other
%\pdfobjcompresslevel=0
%\pdfminorversion=4

% See the \addtolength command later in the file to balance the column lengths
% on the last page of the document

\usepackage{graphics} % for pdf, bitmapped graphics files
\usepackage{epsfig} % for postscript graphics files
\usepackage{amsmath,amssymb}
\usepackage{mathtools}
\usepackage{xcolor}
\usepackage{colortbl}% http://ctan.org/pkg/xcolor
\usepackage{algorithm}
\usepackage[noend]{algpseudocode}

\newcommand{\argmin}{\operatornamewithlimits{arg\,min}}

\algrenewcommand\algorithmicindent{1.5em}

\DeclarePairedDelimiter\abs{\lvert}{\rvert}%
\DeclarePairedDelimiter\norm{\lVert}{\rVert}%
\makeatletter
\let\oldabs\abs
\def\abs{\@ifstar{\oldabs}{\oldabs*}}
\let\oldnorm\norm
\def\norm{\@ifstar{\oldnorm}{\oldnorm*}}
\makeatother
\newcommand{\defeq}{\vcentcolon=}

\newtheorem{definition}{Definition}

\newtheorem{remark}{Remark}
\newtheorem{theorem}{Theorem}

\newtheorem{assumption}{Assumption}
\newtheorem{problem}{Problem}

\title{\LARGE \bf
Efficient and Safe Exploration in Deterministic\\ Markov Decision Processes with Unknown Transition Models
}

\author{Erdem B\i y\i k$^{*1}$, Jonathan Margoliash$^{*2,3}$, Shahrouz Ryan Alimo$^{2}$, and Dorsa Sadigh$^{1,4}$% <-this % stops a space
\thanks{$^*$ First two authors contributed equally and are listed in alphabetical order.}
\thanks{Emails: {\tt\footnotesize ebiyik@stanford.edu}, {\tt\footnotesize jmargoli@eng.ucsd.edu}, {\tt\footnotesize sralimo@jpl.nasa.gov}, {\tt\footnotesize dorsa@cs.stanford.edu}}
\thanks{$^{1}$ Erdem B\i y\i k and Dorsa Sadigh are with Electrical Engineering, Stanford University, CA, 94305, United States.}%
\thanks{$^{2}$ Jonathan Margoliash and Shahrouz Ryan Alimo are with Jet Propulsion Laboratory, California Institute of Technology, CA, 91109, United States.}%
\thanks{$^{3}$ Jonathan Margoliash is with UC San Diego Jacobs School of Engineering, CA, 92093, United States.}%
\thanks{$^{4}$ Dorsa Sadigh is with Computer Science, Stanford University, CA, 94305, United States.}%
}

\begin{document}

\maketitle
\thispagestyle{empty}
\pagestyle{empty}

%%%%%%%%%%%%%%%%%%%%%%%%%%%%%%%%%%%%%%%%%%%%%%%%%%%%%%%%%%%%%%%%%%%%%%%%%%%%%%%%
\begin{abstract}

We propose a safe exploration algorithm for deterministic Markov Decision Processes with unknown transition models. Our algorithm guarantees safety by leveraging Lipschitz-continuity to ensure that no unsafe states are visited during exploration. Unlike many other existing techniques, the provided safety guarantee is deterministic. Our algorithm is optimized to reduce the number of actions needed for exploring the safe space. We demonstrate the performance of our algorithm in comparison with baseline methods in simulation on navigation tasks.

\end{abstract}

%%%%%%%%%%%%%%%%%%%%%%%%%%%%%%%%%%%%%%%%%%%%%%%%%%%%%%%%%%%%%%%%%%%%%%%%%%%%%%%%
\section{Introduction}
Guaranteeing safety is a vital issue for many modern robotics systems, such as unmanned aerial vehicles (UAVs), autonomous cars, or domestic robots~\cite{sadigh2016safe,dey2016fast,katz2017reluplex}. One approach is to attempt to specify all potential scenarios a robot may encounter a priori. However, this is usually impractical due to the fact that such solutions are either computationally expensive to compute or that robots today need to deal with uncertain and diverse environments.
% In addition to being too much time-demanding; this is sometimes not possible, especially when the robot is required to operate in different and possibly unknown environments. 
Hence, we need to design algorithms for robots that can \emph{safely} and \emph{autonomously} learn about the uncertain environment they live in, which can potentially address both problems \cite{argall2009survey, kober2012reinforcement}.

Reinforcement learning algorithms autonomously perform exploration, and they have shown promising results in many fields of artificial intelligence. Therefore, it is natural to explore the implications of reinforcement learning in robotics. Unlike most applications in artificial intelligence, where unsafe outcomes from learning can occur in simulation, in robotics, we would need to avoid unsafe scenarios at all costs.
Therefore, \emph{safe learning}, which is the process of applying a learning algorithm such as reinforcement learning while still satisfying a set of safety specifications, has attracted great interest in recent years. Safety usually has two main interpretations: one is related to stochasticity of the environment, where the goal is to guarantee staying in a given performance bound as is commonly studied in robust control~\cite{coraluppi1999risk,heger1994consideration,sato2001td,borkar2002q,gaskett2003reinforcement,geibel2005risk, aswani2013provably}. The second interpretation is due to the system falling in an undesirable physical state, which is a common interpretation used in robotics~\cite{abbeel2005exploration,abbeel2010autonomous,berkenkamp2016bayesian,ames2017control,akametalu2014reachability}. In this paper, we focus on safe learning and exploration as the avoidance of `unsafe' - physically undesirable - states. We refer to \cite{garcia2015comprehensive} for a survey on safe reinforcement learning.

%In RL this is usually achieved by defining dense reward functions that promote sampling along optimal safe trajectories. Another approach is to explore the safe domain 

To perform exploration in safety-critical systems, prior knowledge of the task is often incorporated into the exploration process. 
 In \cite{turchetta2016safe} and \cite{wachi2018safe},  the authors proposed a method to safely explore a deterministic Markov Decision Process (MDP) using Gaussian processes. In their work, they assumed the transition model is known and that there exists a predefined safety function. Both of these assumptions can be quite restrictive when the system is going to operate in unknown environments. In our work, we plan to address both of these challenges by considering unknown transition models, and no access to a predefined safety function.
Similarly, other work has considered reachability analysis as well as Gaussian processes to perform safe reinforcement learning~\cite{akametalu2014reachability,gillula2011guaranteed}, and used a safety metric to improve their algorithm. However, it is not trivial to derive an appropriate safety metric in many robotics tasks.
Other techniques utilize teacher demonstrations to avoid unsafe states \cite{abbeel2005exploration, abbeel2010autonomous,garcia2012safe}. However, teaching demonstrations are usually difficult to capture as operating robots with high degrees of freedom can be challenging, especially if the system dynamics are unknown due to the existing uncertainty in the environment. 

In our work, we propose an algorithm to safely and autonomously explore a deterministic MDP whose transition function is \emph{unknown}. 
%As in some previous works \cite{turchetta2016safe}, we add to the MDP mild continuity assumptions and an assumption about an initial safe set. 
We take a natural definition of safety, similar to \cite{moldovan2012safe}: \emph{If we can recover from a state $s$, i.e. we can move from it to a state that is known to be safe, then the state $s$ is also safe.} Instead of relying on Gaussian processes or other estimation procedures, our exploration algorithm directly leverages the underlying continuity assumptions, and so guarantees safety deterministically. We demonstrate our algorithm in simulation on two different navigation tasks.
%show the applicability of our algorithm on two different simulated tasks.

\section{Problem Definition}
\label{sec:problem_definition}

Our goal in this project is to design an algorithm for a robot to safely and efficiently explore uncertain parts of the environment. We want an algorithm that deterministically ensure safety and expand the size of the known safe state set. The theoretical work in this section will build towards formalizing this goal in equation (\ref{goal}). 

\subsection{Introductory Assumptions}
First, we begin by formalizing our interactions with the environment.

\begin{assumption}
  We model the dynamical system living in an unknown environment as a deterministic MDP \cite{sutton1998reinforcement}. Such an MDP is a tuple $\langle\mathcal{S}, \mathcal{A}, f(s,a)\rangle$ with a set of states $\mathcal{S}$, a set of actions $\mathcal{A}$, and an unknown deterministic transition model $f : \mathcal{S} \times \mathcal{A} \to \mathcal{S}$. Let $s_0 \in \mathcal{S}$ be the initial state of the MDP.
\end{assumption}

For example, for a quadrotor, the states in $\mathcal{S}$ might be the quadrotor's pitch, yaw and roll, the quadrotor's angular and linear velocities, and its height from the ground, all concatenated together to form a vector in $\mathbb{R}^{k_1}$. The actions in $\mathcal{A}$ could be the fixed rotation speeds of the rotors (a vector in $\mathbb{R}^{k_2}$), and the transition function $f$ would apply those speeds to the rotors over a fixed interval of duration $dt$.

The algorithm we will develop is applicable to finite state and action spaces. When $\mathcal{S}$ or $\mathcal{A}$ are fundamentally continuous, our algorithm can be applied to finite, fine-grained discretizations of those space. We show in Section~\ref{sec:simulations_and_results} how to handle this discretization.

We now make a few definitions that will help us reason about our knowledge of the environment and our ability to take a sequence of actions to efficiently explore the environment.

\begin{definition}
We denote knowledge the actor has about the transition function $f$ as a set $\mathcal{T} \subset \mathcal{S} \times \mathcal{A} \times \mathcal{S}$ such that the following implication relation holds:
\begin{align}
(s, a, s') \in \mathcal{T} \implies f(s, a) = s'
\end{align}
\end{definition}
\begin{definition}
Let $A$, with appropriate superscripts when necessary, denote a sequence of actions; $A_i$ the $i^{\text{th}}$ action in the list and $\abs{A}$ the cardinality. We overload the transition function $f$ such that $f(s,A)=s'$ if and only if taking the actions $A_1, A_2, \dots, A_{\abs{A}}$ sequentially from state $s$ yields the final state $s'$. Lastly, $\mathbb{A}$ denotes the set of all possible ordered sequences of actions $A$.
\end{definition}

Without further assumptions about $f$, it is not possible to perform safe exploration from $s_0$, because we cannot take any action from the initial state with limited knowledge of the action's safety. Therefore, we assume we are given an initial safe set $S_0 \subseteq \mathcal{S}$ such that the initial state of the system is in this set, i.e. $s_0 \in S_0$. Furthermore, we assume as in \cite{turchetta2016safe} that for any $s,s'\in S_0$, we are given a list of actions $A$ such that $f(s, A)=s'$. 

\begin{assumption} Formally, we assume we are given an initial safe set $S_0 \subseteq \mathcal{S}$ such that $s_0 \in S_0$, an initial knowledge set $\mathcal{T}_0$, and restate the above assumption as:
\begin{align*}
& \forall s, s' \in S_0, \exists k \in \mathbb{Z}_{\geq0}, \exists a_1,\dots,a_k\in\mathcal{A}, \exists s_1,\dots,s_{k-1}\in\mathcal{S} \\
& (s, a_1,s_1), (s_1,a_2,s_2), \dots, (s_{k\!-\!1},a_k,s') \in \mathcal{T}_0
\end{align*}
\end{assumption}

We further make a set of assumptions on Lipschitz continuity of the transition function to enable safe exploration.
\begin{assumption}
\label{assumption:lipschitz}
\begin{itemize}
  \item[]
  \item $f(s,a)$ is $L_s$-Lipschitz continuous over the states with some distance metric $d^s:\mathcal{S}\times\mathcal{S}\to\mathbb{R}$:
	\begin{align}
	d^s(f(s,a), f(s',a)) \leq L_s d^s(s,s')
	\end{align}
	\item Similarly, $f(s,a)$ is $L_a$-Lipschitz continuous over the actions with $d^s$ and the additional distance metric $d^a:\mathcal{A}\times\mathcal{A}\to\mathbb{R}$:
	\begin{align}
	d^s(f(s,a), f(s,a')) \leq L_a d^a(a,a')
	\end{align}
\end{itemize}
\end{assumption}
Practically, the Euclidean distance is often used for both $d^s$ and $d^a$.

Note that these requirements are mild and naturally satisfied in most domains: If we take the same action from two similar states, we will end up in similar states; and if we take similar actions from the same state, we will again end up in similar states. For this algorithm, we assume that $L_s$ and $L_a$ have been estimated via some prior methodology, and we note that larger than optimal values of these constants, while leading to less efficient algorithms, still satisfy Assumption \ref{assumption:lipschitz}.  

With this setup, we can now define our notion of safety.
\begin{definition}
\label{def:safety1}
  We define $s \in \mathcal{S}$ \textit{to be safe with respect to} $S_0$ and knowledge set $\mathcal{T}$ if there exists a recovery algorithm that can use the information in $\mathcal{T}$ and the Lipschitz assumptions to confidently produce actions which will transition the MDP from $s$ into $S_0$ after a finite number of steps. We define that action $a$ \textit{is safe at state} $s$ if all possible outcomes of $f(s, a)$, with respect to $\mathcal{T}$ and Lipschitz assumptions, are safe. When calling a set safe, the choice of $S_0$ should be clear from context.
\end{definition}

For instance, in our quadrotor example, $S_0$ might include various hovering states, and thus we consider any state to be safe if we can return to a hovering position from that state. We note that our notion of safety is similar to the safety definition of \cite{moldovan2012safe}. 

This definition of safety, while theoretically satisfying, is not computable in its current form, as we have not described what it means to ``use information to confidently produce actions''. With a bit more work, we will do so in Definition \ref{def:safety2} below.

We now state some important observations about our definition of safety.
\begin{remark}
  Suppose $S$ is a safe set with respect to some knowledge set $\mathcal{T}$. Then:
\begin{align}
  S \subseteq \bar S \defeq \{s \in \mathcal{S} | \exists A \in \mathbb{A} \text{ s.t. } f(s, A) \in S_0\}
\label{eq:safety_definition}
\end{align}
\end{remark}
However, this upper bound on the safe set cannot normally be calculated, as $f$ is unknown.

\begin{remark}
  \label{remark:new knowledge}
Suppose $S$ is safe with respect to some $\mathcal{T}$, and that $\mathcal{T} \subseteq \mathcal{T'}$. Then $S$ is safe with respect to $\mathcal{T'}$.
\end{remark}

\subsection{Computing the Safe Set}
We now utilize the knowledge set $\mathcal{T}$ to determine which states and actions are safe.

\begin{definition}
  In order to handle unknown states, we define an uncertain transition function $f_u$, parameterized by knowledge $\mathcal{T}$, that maps each state-action pair to all of its possible outcomes:
	\begin{align*}
	f_u(s,a;\mathcal{T}) \defeq 
	\bigcap\limits_{(s',a',s'')\in\mathcal{T}} \phi(s'', L_sd^s(s',s)\!+\!L_ad^a(a',a))
	\end{align*}
	where $\phi(s,x) \defeq \{s' \in \mathcal{S} | d^s(s,s') \leq x\}$ denotes the hypersphere centered at $s$ with radius $x$ over the distance metric $d^s$. A visualization of $f_u$-function is shown in Fig.~\ref{fig:lipschitz}.
\end{definition}
\begin{figure}[h]
	\centering
	\includegraphics[width=\columnwidth]{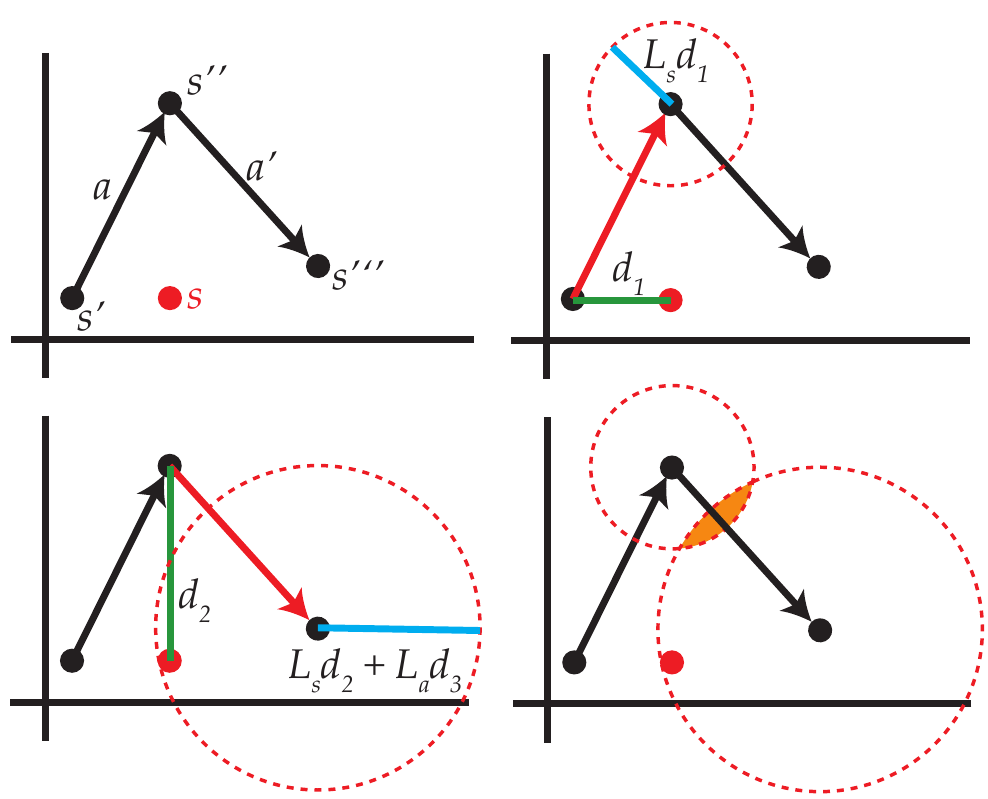}
	\vspace*{-20pt}
	\caption{A visualization of Lipschitz assumption and $f_u$-function has been shown. \textbf{Top left.} We are given two state transitions: $f(s',a)=s''$ and $f(s'',a')=s'''$. We want to know the possible outcomes when we take action $a$ from state $s$. \textbf{Top right.} Considering the first transition, the outcome of $f(s,a)$ must be within a circle of radius $L_sd_1$ from $s''$, where $d_1 = d_s(s',s)$. \textbf{Bottom left.} Similarly, when we consider the second transition, we note that the outcome must be within a circle of radius $L_sd_2 + L_ad_3$ from $s'''$, where $d_2 = d_s(s'',s)$ and $d_3=d_a(a,a')$. \textbf{Bottom right.} The states that satisfy both of those conditions lie on the intersection, which is shown as the shaded area. It is exactly equal to $f_u(s,a;\mathcal{T})$ where $\mathcal{T}=\{(s',a,s''),(s'',a',s''')\}$.}
	\label{fig:lipschitz}
	\vspace*{-10pt}
\end{figure}

\begin{definition}
  If $S$ is a safe set with respect to knowledge $\mathcal{T}$, then we define the expansion function as:
  \begin{align}
  E_\mathcal{T}(S) \defeq S \cup \{s \in \mathcal{S} | \exists a \in \mathcal{A} \text{ s.t. } f_u(s,a;\mathcal{T})\subseteq S\}
  \end{align}
  We also let $E_{\mathcal{T}}^{(k)}(S) = E_{\mathcal{T}}(E_{\mathcal{T}}^{(k-1)}(S))$ with $E_{\mathcal{T}}^{(1)}(S) = E_{\mathcal{T}}(S)$. Moreover, we denote $\bar E_{\mathcal{T}}(S) = \lim_{k\to\infty}E_{\mathcal{T}}^{(k)}(S)$ to be the fixed point of the expansion function.
\end{definition}

Using these definitions, we make the following crucial observation that is similar to Eq.~\eqref{eq:safety_definition}:
\begin{theorem}
  \label{expansionThm}
  If $S$ is safe with respect to $\mathcal{T}$, then so is $E_\mathcal{T}(S)$.
\end{theorem}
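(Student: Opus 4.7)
The plan is to show that every state in $E_\mathcal{T}(S)$ admits a recovery sequence of actions leading back into $S_0$, using only knowledge from $\mathcal{T}$ and the Lipschitz assumptions. States already in $S$ are safe by hypothesis, so the work lies in handling a newly-added state $s \in E_\mathcal{T}(S) \setminus S$. For such a state, the definition of $E_\mathcal{T}$ supplies an action $a$ with $f_u(s,a;\mathcal{T}) \subseteq S$; the goal is to argue that executing $a$ from $s$ is safe (in the sense of Definition~\ref{def:safety1}) and lands the system in $S$, from which the known recovery for $S$ can be invoked.

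The key intermediate lemma I would establish first is that the true transition always lies in the uncertain outcome set: $f(s,a) \in f_u(s,a;\mathcal{T})$ for every $s,a,\mathcal{T}$. To prove it, I would fix an arbitrary $(s',a',s'') \in \mathcal{T}$, so $f(s',a')=s''$, and then bound $d^s(f(s,a), s'')$ by inserting the intermediate point $f(s',a)$ and applying the triangle inequality together with both Lipschitz inequalities from Assumption~\ref{assumption:lipschitz}:
\begin{align*}
d^s(f(s,a), s'') &\leq d^s(f(s,a), f(s',a)) + d^s(f(s',a), f(s',a')) \\
&\leq L_s d^s(s,s') + L_a d^a(a,a').
\end{align*}
Hence $f(s,a) \in \phi(s'', L_s d^s(s',s) + L_a d^a(a',a))$; intersecting over all triples in $\mathcal{T}$ yields $f(s,a) \in f_u(s,a;\mathcal{T})$.

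With this in hand, the rest is assembly. Since $f_u(s,a;\mathcal{T}) \subseteq S$, every possible outcome of $f(s,a)$ consistent with $\mathcal{T}$ and the Lipschitz assumptions lies in $S$, which is safe; by Definition~\ref{def:safety1} this exactly says $a$ is safe at $s$. Moreover, the actual outcome $f(s,a)$ is one of those points in $S$, and the hypothesis that $S$ is safe with respect to $\mathcal{T}$ provides a recovery sequence $A'$ with $f(f(s,a), A') \in S_0$. Concatenating $a$ with $A'$ gives a finite recovery sequence from $s$ into $S_0$, computable using only $\mathcal{T}$ and the Lipschitz constants, so $s$ is safe. Since $s$ was an arbitrary element of $E_\mathcal{T}(S) \setminus S$, combined with the safety of $S$ we conclude $E_\mathcal{T}(S)$ is safe with respect to $\mathcal{T}$.

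The only step that requires genuine care is the lemma $f(s,a) \in f_u(s,a;\mathcal{T})$; once that containment is secured, the recursive structure of the safety definition closes the argument immediately. The main potential obstacle is a subtle one: Definition~\ref{def:safety1} demands that the recovery algorithm use only information in $\mathcal{T}$ (not the unknown $f$), so I should emphasize that the concatenated recovery $a \cdot A'$ is produced from $\mathcal{T}$ without appealing to $f$ directly — $a$ is witnessed by the expansion rule checking $f_u(s,a;\mathcal{T}) \subseteq S$, and $A'$ comes from the presumed recovery algorithm for $S$.
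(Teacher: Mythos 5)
Your proposal is correct and follows essentially the same route as the paper's proof: case on whether $s$ was already in $S$ or was added by the expansion rule, use the witness action $a$ with $f_u(s,a;\mathcal{T}) \subseteq S$ to conclude $f(s,a) \in S$, and chain the recovery algorithm for $S$. The one place you go beyond the paper is in explicitly stating and proving the containment $f(s,a) \in f_u(s,a;\mathcal{T})$ via the triangle inequality and both Lipschitz bounds — the paper asserts ``we do know $f(s,a)\in S$'' without justification, so your lemma is a worthwhile addition rather than a deviation.
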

\begin{proof}
  Let $s \in E_\mathcal{T}(S)$. Then, at least one of the following is true:
  \begin{itemize}
  	\item $s \in S$
  	\item $\exists a \in A$ such that $f_u(s, a; \mathcal{T}) \subseteq S$
  \end{itemize}
If $s \in S$, then since $S$ is safe with respect to $\mathcal{T}$, so is $s$. Otherwise, since there exists an $a$ such that $f_u(s, a; \mathcal{T}) \subseteq S$, even if $f(s,a)$ is still unknown, we do know $f(s, a) \in S$, which is safe. From there we can use the recovery algorithm provided by Definition \ref{def:safety1} for $S$ to return to $S_0$.
\end{proof}

\begin{definition}
If we take action $a_i$ from the state $s_i$ at some time step $i\geq0$, then we recursively define our knowledge after taking step $i$ to be $\mathcal{T}_{i + 1} = \mathcal{T}_i \cup \{(s_i, a_i, f(s_i,a_i))\}$.
\end{definition}
\begin{definition}
\label{def:safety2}
  For timestep $i \geq 0$, recursively define the safe set after step $i$ to be
  \begin{align}
    S_{i+1} = \bar E_{\mathcal{T}_{i}}(S_{i})
    \label{eq:safe_set_expansion}
  \end{align}
\end{definition}

Using Theorem~\ref{expansionThm}, we see that this definition is justified, i.e. $S_{i+1}$ is safe with respect to $\mathcal{T}_i$ for all $i$. We now have a computable set which we can pair with our more theoretical definition of safety, Definition \ref{def:safety1}.
\begin{remark}
  At step $i$, a state $s$ is safe if and only if $\exists a \in \mathcal{A}$ such that the action $a$ is safe at $s$.
\end{remark}

We are now ready to state our goal:
\begin{problem}
  Our goal is to maximize the rate of safe exploration over the number of actions taken, given an initial safe set $S_0$ and state $s_0\in S_0$. 
Formally, 
\begin{align}
\label{goal}
  \max_{a_1,a_2,\dots,a_m} & \frac{\abs{S_m}}{m} \nonumber\\
  \text{subject to } & s_{k+1} = f(s_k,a_{k+1}) \in S_k\subseteq \bar S \subseteq \mathcal{S}, \nonumber\\
  & a_{k+1}\in\mathcal{A}, \text{ for } k=0,1,\dots,m-1
\end{align}
where $m$ is the total number of actions taken, and $S_i$ is the set of states deterministically known to be safe by the algorithm after $i$ actions, and $\bar{S}$ is as defined in Eq.~\eqref{eq:safety_definition}.
\end{problem}
\section{Algorithm}
\label{sec:algorithm}

In order to present our algorithm that efficiently expands the safe set, we first introduce some notations and functions.
\subsection{Preliminaries}

\begin{definition}
We define the \emph{path-knowledge} function $q: \mathcal{S}\times\mathcal{S}\to\{0,1\}$ parametrized with $\mathcal{T}$ that carries the transition triplets:
\begin{align*}
	q(s,s';\mathcal{T}) = \begin{cases}
	1 & \text{if } \exists A: (s,A_1,f(s,A_1)),\\&(f(s,A_1), A_2, f(f(s,A_1),A_2)), \dots \in \mathcal{T} \\
	&\land\:f(s,A) = s'\\
	0 & \text{otherwise}
	\end{cases}
\end{align*}
That is, $q(s,s';\mathcal{T})=1$ if and only if there exist triplets in $\mathcal{T}$ that let us move from $s$ to $s'$, and $q(s,s';\mathcal{T})=0$ otherwise.
\end{definition}

While performing safe exploration, it is both desirable and useful to learn the transition function. While $\mathcal{T}$ implicitly performs this, it is useful to denote it as a function.

\begin{definition}
We therefore denote the transitions that have been learnt with certainity (without ambiguity) as function $f_c:\mathcal{S}\times\mathcal{A}\to\mathcal{S}\cup\{\gamma\}$:
\begin{align*}
	f_c(s,a;\mathcal{T}) \defeq \begin{cases}
	s' & \text{if } (s,a,s')\in\mathcal{T}\\
	\gamma & \text{otherwise}
	\end{cases}
\end{align*}
where $\gamma$ is a placeholder that represents ``an unknown state".
\end{definition}
Since the MDP is deterministic, $f_c$ is a properly defined function, i.e.
\begin{align*}
(s,a,s')\wedge(s,a,s'')\in\mathcal{T}_i \implies s' = s''
\end{align*}
We also overload the $f_c$ function as $\mathcal{S}\times\mathbb{A}\to\mathcal{S}\cup\{\gamma\}$ similar to $f$.

\begin{definition}
	We define the \emph{path-planning} function $g: \mathcal{S}\times\mathcal{S}\to \mathbb{A}$:
	\begin{align*}
	g(s,s';\mathcal{T}) \defeq\!\begin{cases}
	\argmin_{A: f_c(s,A;\mathcal{T})=s'} \abs{A} & \text{if } q(s,s';\mathcal{T})=1\\
	\emptyset & \text{otherwise}
	\end{cases}
	\end{align*}
	That is, $g(s,s';\mathcal{T})$ gives the smallest list of actions that moves from $s$ to $s'$ if such a transition is known in $\mathcal{T}$. In the case that there exist several such sequences, we assume $g(s,s';\mathcal{T})$ gives any one of them.
\end{definition}

\begin{remark}
	\label{fact:we_can_plan}
	Due to the assumption that we know how to move from one state to another inside the initial safe set, we have the following two relations for $\forall i$:
	\begin{align*}
	s,s' \in S_0 \implies q(s,s';\mathcal{T}_i)=1\\
	s,s' \in S_0, s\neq s' \implies g(s,s';\mathcal{T}_i) \neq \emptyset
	\end{align*}
\end{remark}

%\subsection{Safe Set Expansion}
%
%\begin{theorem}[Expandability]
%	For continuous MDPs, $S_i\subset\bar S$ (strictly subset) if
%	\begin{align*}
%	c(s,a;\mathcal{T}_i) = s'
%	\end{align*}
%	for some $s,a,s'$ such that $s\in\delta S_i$ and $s'\in\mathrm{int}(S_i)$, where $\delta$ denotes the boundary and $\mathrm{int}(.)$ denotes the interior of the set with the distance metric $d^s$.
%\end{theorem}
%\begin{proof}
%	Trivially, $S_i\subseteq\bar S$. To show $\bar S \setminus S_i\neq\emptyset$, we first let
%	\begin{align*}
%	D = \min_{s''}d^s(s',s'')\textrm{ subject to } s''\in\delta S_i.
%	\end{align*}
%	Due to Lipschitz assumption over states, we know that
%	$d^s(s',f(s^*,a))\leq L_sd^s(s,s^*)$. For all $s^*\in\mathcal{S}$ that satisfies $d^s(s,s^*)\leq D/L_s$, we have $d^s(s',f(s^*,a))\leq D$. This means $f(s^*,a)\in S_i$, so $s^*\in\bar S$. Since $s\in \delta S_i$ and $D>0$, some of $s^*$ will not be in $S_i$.
%\end{proof}
%

\subsection{Efficient Exploration}

In order to efficiently expand the safe set, we must optimize for the list of actions that will lead to the largest safe set expansion with minimum number of actions. We will do this in two steps:
\begin{enumerate}
  \item We first define a measure that corresponds to the possible safe set expansion from taking a new action.
  \item We then greedily optimize to take actions which are considered efficient under that measure.
\end{enumerate}

The most straightforward approach for defining a measure that corresponds to safe set expansion is to measure the amount of safe set expansion for each possible outcome of an action, and compute an expected value over all such possibilities. We call this the safe set expansion measure. 

However, as we will practically demonstrate in our results, this approach can be highly suboptimal especially for continuous dynamics. Instead, we develop a second measure that quantifies the uncertainty reduction on the outcomes of all state-action pairs by taking an action. This prioritizes exploration towards the safe set boundary in addition to exploring actions which will expand the boundary.

\begin{definition}
	We define the uncertainty reduction measure:
	\begin{align*}
	&\Delta(s,a;\mathcal{T}_i) \defeq\\
	&\sum_{s''\in U} p(s'';U)\sum_{\{s',a'\}} \abs{f_u(s',a';\mathcal{T}_i)\setminus f_u(s',a';\mathcal{T}_i\!\cup\!\{s,a,s''\})}
	\end{align*}
	where the inner summation is over all state-action pairs, $U=f_u(s,a;\mathcal{T}_i)$, and $p(s'';U)$ is the modeled probability that action $a$ will move from state $s$ to $s''$.
\end{definition}

While the underlying MDP is deterministic, for some actions we only know that they will end up in a specific set of points, thus it makes sense to model our uncertainty as a probability distribution over that set. While different probability models can be employed for $p(s'';U)$, we are going to use uniform distribution over $U$ for simplicity.

To show that our measure is well defined, we have the following theorem.

\begin{theorem}\label{th:nonincreasing_uncertainty}
The uncertainty does not increase when some more actions are taken. In fact, the updated uncertainty is a subset of the previous one. Formally, for all $s\in\mathcal{S}, a\in\mathcal{A}$, we have $\mathcal{T}_i\subseteq\mathcal{T}_j \implies f_u(s,a;\mathcal{T}_i) \supseteq f_u(s,a;\mathcal{T}_j)$ and thus $\Delta(s,a;\mathcal{T}_i) \geq 0$.
\end{theorem}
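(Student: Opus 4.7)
The plan is to derive both assertions of the theorem from a single elementary observation: set intersection is antitone in its indexing set. With this fact available, the set-theoretic claim about $f_u$ reduces to a one-line argument, and the nonnegativity of $\Delta$ follows as a routine corollary.

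First I would address the monotonicity of $f_u$. The definition expresses $f_u(s,a;\mathcal{T})$ as an intersection $\bigcap_{(s',a',s'')\in\mathcal{T}} \phi(s'', L_s d^s(s',s) + L_a d^a(a',a))$, with one hypersphere per triplet in $\mathcal{T}$. If $\mathcal{T}_i \subseteq \mathcal{T}_j$, then every hypersphere appearing in the intersection that defines $f_u(s,a;\mathcal{T}_i)$ also appears in the intersection defining $f_u(s,a;\mathcal{T}_j)$, possibly alongside additional hyperspheres. Hence any $x \in f_u(s,a;\mathcal{T}_j)$ lies in every hypersphere indexed by $\mathcal{T}_i$, so $x \in f_u(s,a;\mathcal{T}_i)$, yielding $f_u(s,a;\mathcal{T}_i) \supseteq f_u(s,a;\mathcal{T}_j)$. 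The boundary case $\mathcal{T}_i = \emptyset$ reduces to an empty intersection, conventionally taken as all of $\mathcal{S}$, and the inclusion still holds trivially.

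For the second part, I would apply this monotonicity to the pair $\mathcal{T}_i$ and $\mathcal{T}_i \cup \{(s,a,s'')\}$. Since $\mathcal{T}_i \subseteq \mathcal{T}_i \cup \{(s,a,s'')\}$, the first part gives $f_u(s',a';\mathcal{T}_i \cup \{(s,a,s'')\}) \subseteq f_u(s',a';\mathcal{T}_i)$ for every state-action pair $(s',a')$. Therefore each inner term $\abs{f_u(s',a';\mathcal{T}_i)\setminus f_u(s',a';\mathcal{T}_i\cup\{(s,a,s'')\})}$ is the cardinality of a set difference lying inside a superset, and so it is nonnegative. Weighting by the probabilities $p(s'';U) \geq 0$ and summing over $s'' \in U$ and over state-action pairs preserves nonnegativity, giving $\Delta(s,a;\mathcal{T}_i) \geq 0$.

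I do not anticipate any real obstacle here: the entire argument is an exercise in the monotonicity of set intersection, and none of the geometry of $\phi$, the Lipschitz constants, or the MDP dynamics enters beyond the formal definition of $f_u$. The only minor point worth verifying in passing is that enlarging the knowledge set does not change the \emph{range} of the inner summation in $\Delta$ — but since that sum is taken over all state-action pairs regardless of $\mathcal{T}$, this is automatic.
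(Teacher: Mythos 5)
Your proposal is correct and follows essentially the same route as the paper: the paper's one-line proof writes $f_u(s,a;\mathcal{T}_j) = f_u(s,a;\mathcal{T}_i)\cap f_u(s,a;\mathcal{T}_j\setminus\mathcal{T}_i)\subseteq f_u(s,a;\mathcal{T}_i)$, which is just your ``intersection is antitone in the indexing set'' observation phrased as a decomposition. Your explicit treatment of the $\Delta(s,a;\mathcal{T}_i)\geq 0$ corollary and the empty-intersection convention is a harmless elaboration of what the paper leaves implicit.
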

\begin{proof}
  $f_u(s,a;\mathcal{T}_j) = f_u(s,a;\mathcal{T}_i)\cap f_u(s,a;\mathcal{T}_j\setminus\mathcal{T}_i)\subseteq f_u(s,a;\mathcal{T}_i)$ for all $s$ and $a$.
\end{proof}

By Theorem~\ref{th:nonincreasing_uncertainty}, we can write:
\begin{align*}
&\Delta(s,a;\mathcal{T}_i) =\\
&\sum_{s''\in U}\!p(s'';U)\!\sum_{\{s',a'\}}\! \abs{f_u(s',a';\mathcal{T}_i)}\! -\!\abs{f_u(s',a';\mathcal{T}_i\!\cup\!\{s,a,s''\}\!)}
\end{align*}

Our goal is now to take steps to maximize the reduction in uncertainty. We realize that it is computationally difficult to maximize this reduction over sequences of multiple uncertain actions, so instead we will greedily optimize for the best immediate reduction in uncertainty. However, we still cannot directly maximize $\Delta(s,a;\mathcal{T}_i)$ over all $s$ and $a$, because at any step $i$ we are at a fixed state $s_i$ and getting to state $s$ may require many intermediate actions. Thus we must optimize over paths of actions $A \in \mathbb{A}$ starting at $s_i$. A second realization is that though it may be more efficient in some circumstances to try to get near a state $s$ via paths containing uncertain actions, it is difficult to optimize over such paths, thus we only consider paths to $s$ which consist of certain actions.

The following theorem will be useful for our algorithm:
\begin{theorem}\label{th:uncertainty_reduction}
	Taking actions that are already in the collection of knowledge, $\mathcal{T}_i$, does not lead to any safe set expansion.
\end{theorem}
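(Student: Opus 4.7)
The plan is to reduce the claim to two observations: first, that the hypothesis leaves the knowledge set literally unchanged, and second, that the expansion operator $\bar E_\mathcal{T}$ produces fixed points. Combining these, the safe set cannot move after taking such an action.

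Specifically, if $(s_i, a_i, f(s_i, a_i)) \in \mathcal{T}_i$, then the update rule from Definition~\ref{def:safety2} gives
\[
\mathcal{T}_{i+1} \;=\; \mathcal{T}_i \cup \{(s_i, a_i, f(s_i, a_i))\} \;=\; \mathcal{T}_i,
\]
because adjoining an element already present is a no-op. Consequently $f_u(\cdot,\cdot;\mathcal{T}_{i+1})$ and $f_u(\cdot,\cdot;\mathcal{T}_i)$ agree pointwise, so the expansion operators $E_{\mathcal{T}_{i+1}}$ and $E_{\mathcal{T}_i}$ coincide on every subset of $\mathcal{S}$, as do their closures $\bar E_{\mathcal{T}_{i+1}}$ and $\bar E_{\mathcal{T}_i}$. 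Since $\bar E_\mathcal{T}(S) = \lim_{k\to\infty} E_\mathcal{T}^{(k)}(S)$ by definition, any set produced by $\bar E_\mathcal{T}$ is stable under $E_\mathcal{T}$, hence stable under $\bar E_\mathcal{T}$; in particular, $\bar E_\mathcal{T}(\bar E_\mathcal{T}(S)) = \bar E_\mathcal{T}(S)$. Applying this idempotence with $\mathcal{T} = \mathcal{T}_i$ and using that $S_{i+1} = \bar E_{\mathcal{T}_i}(S_i)$ is itself such a closure, I would conclude
\[
S_{i+2} \;=\; \bar E_{\mathcal{T}_{i+1}}(S_{i+1}) \;=\; \bar E_{\mathcal{T}_i}(S_{i+1}) \;=\; S_{i+1},
\]
which is exactly the statement that no safe-set expansion results from the action taken at step $i$.

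The main obstacle is keeping the time indices straight: one must distinguish ``the knowledge used to produce $S_{i+1}$'' (namely $\mathcal{T}_i$) from ``the knowledge after the action at step $i$'' (namely $\mathcal{T}_{i+1}$), so that the hypothesis $(s_i, a_i, f(s_i, a_i)) \in \mathcal{T}_i$ correctly collapses $\mathcal{T}_{i+1}$ back to $\mathcal{T}_i$ at the right place in the chain. Apart from this bookkeeping, there are no estimates to grind through and no auxiliary results needed beyond the fixed-point characterization of $\bar E_\mathcal{T}$ already built into Definition~\ref{def:safety2} and the monotonicity observation in Theorem~\ref{th:nonincreasing_uncertainty}.
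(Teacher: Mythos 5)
Your proposal is correct and follows essentially the same route as the paper's proof: the repeated triplet leaves $\mathcal{T}_i$ unchanged, hence $f_u$ and the expansion operator are unchanged, hence the safe set cannot grow. You additionally make explicit the idempotence of $\bar E_{\mathcal{T}}$ (so that $S_{i+2}=\bar E_{\mathcal{T}_i}(S_{i+1})=S_{i+1}$), a step the paper leaves implicit; this is a welcome tightening rather than a different argument.
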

\begin{proof}
	We first note that the update rule depends only on the uncertainty function $f_u$, which depends on $s$, $a$ and $\mathcal{T}_i$. Suppose $(s,a,s')\in\mathcal{T}_i$. When we take the action $a$ from state $s$, we do not add a new element to $\mathcal{T}_i$. Since $\mathcal{T}_i$ does not change, there will be no change in $f_u(s,a;\mathcal{T}_i)$ for any $s$ and $a$, so we will not be able to expand the safe set.
\end{proof}

Due to Theorem~\ref{th:uncertainty_reduction}, if we take a path of certain actions from $s_i$ to $s$ and then take an uncertain action at $s$, we know that the entire safe set expansion will come from the last uncertain step. Thus, we wish to perform the following optimization: 
\begin{align}
\max_{A\in\mathbb{A}, a\in\mathcal{A}} &\quad \frac{1}{\abs{A}+1}\Delta(s,a;\mathcal{T}_i) \nonumber\\
\text{subject to} &\quad s=f_c(s_i,A;\mathcal{T}_i),\nonumber\\
&\quad f_u(s,a;\mathcal{T}_i) \subseteq S_{i+1}
\end{align}

Again due to Theorem~\ref{th:uncertainty_reduction}, if two different action lists lead to the same state from state $s_i$, then the one with smaller cardinality is better off in the optimization. This means, we can just optimize over the shortest paths between state $s_i$ and the optimization variable state $s$. The optimization can then be reformulated as follows:
\begin{align}
\max_{s\in\mathcal{S}, a\in\mathcal{A}} &\quad \frac{1}{\abs{g(s_i,s;\mathcal{T}_i)}+1}\Delta(s,a;\mathcal{T}_i)\nonumber\\
\text{subject to} &\quad q(s_i,s;\mathcal{T}_i)=1,\nonumber\\
&\quad f_u(s,a;\mathcal{T}_i) \subseteq S_{i+1}
\label{eq:optimization}
\end{align}
This optimization is over finite variables for finite discrete MDPs.

\subsection{Overall Algorithm}
We first present the pseudocodes for the algorithm blocks that we so far formalized. Algorithm~\ref{alg:safe_set_expansion} is the pseudocode for computing $\bar{E}_{\mathcal{T}_i}(S_i)$, which corresponds to the procedure that we use for expanding the safe set at iteration $i$.

\begin{algorithm}[h]
	\caption{Safe Set Expansion}
	\label{alg:safe_set_expansion}
	\begin{algorithmic}[1]
		\Procedure{ExpandSafeSet}{$S_i,\mathcal{T}_i,f_u$}
		\State $S_{i+1}\gets S_i$
		\While {$S_{i+1}$ not converged}
		\State $S_{i+1}\!\gets\!S_{i+1}\cup\{s | f_u(s,a;\mathcal{T}_i)\!\subseteq\!S_{i+1} \text{ for } \exists a\!\in\!\mathcal{A}\}$
		\EndWhile
		\State \Return $S_{i+1}$
		\EndProcedure
	\end{algorithmic}
\end{algorithm}

In order to perform the optimization for efficient exploration, we compute expected total uncertainty reduction as in Algorithm~\ref{alg:expected_uncertainty_reduction}. Then, Algorithm~\ref{alg:greedy_optimization} benefits from that procedure to optimize for uncertainty reduction. Note that in Algorithm~\ref{alg:expected_uncertainty_reduction}, we use the scaling factor of $\frac{1}{\abs{U}}$ in line 7, since we are using a uniform distribution over the possible outcomes of an action.

\begin{algorithm}[h]
	\caption{Expected Uncertainty Reduction Computation}
	\label{alg:expected_uncertainty_reduction}
	\begin{algorithmic}[1]
		\Procedure{ExpectedReduction}{$f_u, s, a, L_s, L_a$}
		\State $U \gets f_u(s,a;\mathcal{T}_i)$
		\State $v \gets 0$
		\For {$s''\in U$}
    \State $f_u'(s',a';\mathcal{T}_i)\!\gets\!f_u(s',a';\mathcal{T}_i)\cap \phi(s'',L_sd^s(s,s')+L_ad^a(a,a'))$ ($\forall s'\in\mathcal{S}, \forall a'\in\mathcal{A}$)
    \State $\Delta(s',a')\!\gets\!\abs{f_u(s',a';\mathcal{T}_i)}\!-\!\abs{f_u'(s',a';\mathcal{T}_i)}$ $(\forall s'\in\mathcal{S}, \forall a'\in\mathcal{A})$
		\State $v \gets v + \frac{1}{\abs{U}}\sum_{s'\in\mathcal{S},a'\in\mathcal{A}} \Delta(s',a')$
		\EndFor
		\State \Return $v$
		\EndProcedure
	\end{algorithmic}
\end{algorithm}

\begin{algorithm}[h]
	\caption{Optimization for Uncertainty Reduction}
	\label{alg:greedy_optimization}
	\begin{algorithmic}[1]
		\Procedure{OptimizeGreedily}{$S_{i+1},\!s_i, \mathcal{T}_i,\!f_u, L_s,\!L_a$}
		\State $V \gets 0$
		\For {$s\in S_{i+1}$}
		\State $G \gets g(s_i,s;\mathcal{T}_i)$\Comment{Shortest-path algorithm}
		\If {$G = \emptyset$ and $s_i \neq s$}
		\State  \textbf{continue}
		\EndIf
		\For {$a\in\mathcal{A}$}
		\State $v\gets \textsc{ExpectedReduction}(f_u,s,a,L_s,L_a)$
		\State $v\gets v/(\abs{G}+1)$
		\If {$v > V$}
		\State $V\gets v$
		\State $G^*, a^* \gets G,a$
		\EndIf
		\EndFor
		\EndFor
		\State \Return $G^*, a^*$
		\EndProcedure
	\end{algorithmic}
\end{algorithm}

Lastly, we note that it is possible to have state-action pairs that are not in the current knowledge set, but have only one possible outcome. Adding these pairs to the knowledge in each iteration can possibly increase efficiency. We present this procedure in Algorithm~\ref{alg:expand_knowledge_collection}. Note we check if $\abs{U}=1$ for discrete MDPs. For continuous MDPs, where we can sample state and action spaces as we will describe in Section~\ref{sec:simulations_and_results}, we check if $U$ is a singleton.

\begin{algorithm}[h]
	\caption{Knowledge Expansion (Discrete MDPs)}
	\label{alg:expand_knowledge_collection}
	\begin{algorithmic}[1]
		\Procedure{ExpandKnowledge}{$\mathcal{T}_i,f_u$}
		\For {$s\in \mathcal{S}$}
		\For {$a\in \mathcal{A}$}
		\State $U\gets f_u(s,a;\mathcal{T}_i)$\Comment{$U_1$ is the first element}
		\If {$\abs{U}=1$}
			\State $\mathcal{T}_i \gets \mathcal{T}_i \cup \{(s,a,U_1)\}$
		\EndIf
		\EndFor
		\EndFor
		\State \Return $\mathcal{T}_i$
		\EndProcedure
	\end{algorithmic}
\end{algorithm}

We present the complete algorithm as a pseudocode in Algorithm~\ref{alg:pseudocode}.

\begin{algorithm}[h]
	\caption{Efficient and Safe Exploration (Discrete MDPs)}
	\label{alg:pseudocode}
	\begin{algorithmic}[1]
		\Require $n$\Comment{Iteration count}
		\Require $L_s, L_a$\Comment{Lipschitz-continuity constants}
		\Require $S_0\subseteq \mathcal{S}$ \Comment{Initial safe set}
		\Require $\mathcal{T}_0\subset \mathcal{S}\!\times\!\mathcal{A}\!\times\!\mathcal{S}$\Comment{Initial knowledge}
		\Require $s_0\in S_0$\Comment{Initial state}
		\State Compute $f_u(s,a;\mathcal{T}_0)$ for $\forall s\in\mathcal{S}$, $\forall a\in\mathcal{A}$
		\For {$i \gets 0$ to $n-1$}
		\State $\mathcal{T}_i \gets \textsc{ExpandKnowledge}(\mathcal{T}_i,f_u)$
		\State $S_{i+1} \gets \textsc{ExpandSafeSet}(S_i,\mathcal{T}_i,f_u)$
		\State $G^*,\!a^*\!\gets\!\textsc{OptimizeGreedily}(S_{i+1},\!s_i,\!\mathcal{T}_i,\!f_u,\!L_s,\!L_a)$
		\State $s^* \gets f(s_i,G^*)$\Comment{Take the certain actions}
		\State $s_{i+1} \gets f(s^*,a^*)$\Comment{Take the uncertain action}
		\State $\mathcal{T}_{i+1} \gets \mathcal{T}_i\cup (s^*,a^*,s_{i+1})$
    \State $f_u(s,a;\mathcal{T}_{i+1}\!)\!\gets\!f_u(s,a;\mathcal{T}_i)\cap \phi(s_{i+1},L_sd^s(s,s_{i+1}\!)+L_ad^a(a,a^*))$ $(\forall s\in\mathcal{S}, \forall a\in\mathcal{A})$
		\EndFor
	\end{algorithmic}
\end{algorithm}

\section{Simulations and Results}
\label{sec:simulations_and_results}

We call the algorithm developed above \textbf{Safe Exploration Optimized For Uncertainty Reduction}. We developed the following alternative methods as baselines to compare our algorithm against:

\noindent\textbf{Random Exploration.} In this method, we perform the safe set expansion as in our algorithm. However, each action taken is chosen randomly from the set of all possible actions, and is not necessarily safe.

\noindent\textbf{Safe Exploration with No Optimization.} In this method, we again perform the safe set expansion as in our algorithm. However, each action taken is chosen randomly from the set of actions that are classified as safe at the current state.

\noindent\textbf{Safe Exploration Optimized for Safe Set Expansion.} This is similar to Safe Exploration Optimized for Uncertainty Reduction. However, instead of optimizing for the uncertainty reduction measure, we optimize for the safe set expansion measure described earlier. If the maximum expected safe set expansion amount is zero, we take the safe action that will, expectedly, push the system to its closest safety boundary at that time, so that it can possibly expand the safe set later. 

We simulated two different environments with continuous state and action spaces, described below, to analyze the performance of our algorithm. In each environment, we used Euclidean distance for both $d^s$ and $d^a$. We used \emph{breadth-first search (BFS)} for the $g$ function.

For both environments, we began by uniformly sampling the state and action spaces. We used only those original samples when calculating the $\Delta$-function, and not any new states we might have encountered since starting the simulation, so that the optimization would not be biased towards already visited states.

To quantitatively assess the performance of our algorithm in comparison with the baselines, we defined and used the following metrics:
\begin{itemize}
	\item \textbf{Safe Set Size}. We plot the number of actions vs. $\abs{S_i}$ to evaluate the safe set expansion efficiency.
	\item \textbf{Total Uncertainty.} We plot the number of actions vs. $\sum_{s\in\mathcal{S},a\in\mathcal{A}}\abs{f_u(s,a;\mathcal{T}_i)}$ to analyze how fast the total uncertainty decreases. For consistency among the iterations, we sum only over states in the original sampling, i.e. we do not consider other states encountered since starting the simulation. 
\end{itemize}

\subsection{Muddy Jumper}
We simulated a simple system with the transition model:
\begin{align*}
f(s,a) = s + a(1-\psi(s))
\end{align*}
where $a\in\mathcal{A}=[-C, C]$, $s\in\mathcal{S}=\mathbb{R}$, and $\psi(s)$ is the dampening factor. We simulated it as:
\begin{align*}
\psi(s) = \begin{cases}
0 & \text{if } \abs{s}<A\\
\frac{\abs{s}-A}{B-A} & \text{if } A\leq\abs{s}<B\\
1 & \text{otherwise}
\end{cases}
\end{align*}
which is plotted in Fig.~\ref{fig:dampening_factor}. This environment was inspired by the idea of a robot jumping on muddy ground. When the dampening factor is $1$, the robot is not able to move anymore, so those states are unsafe.

\begin{figure}[h]
	\centering
	\includegraphics[width=\columnwidth]{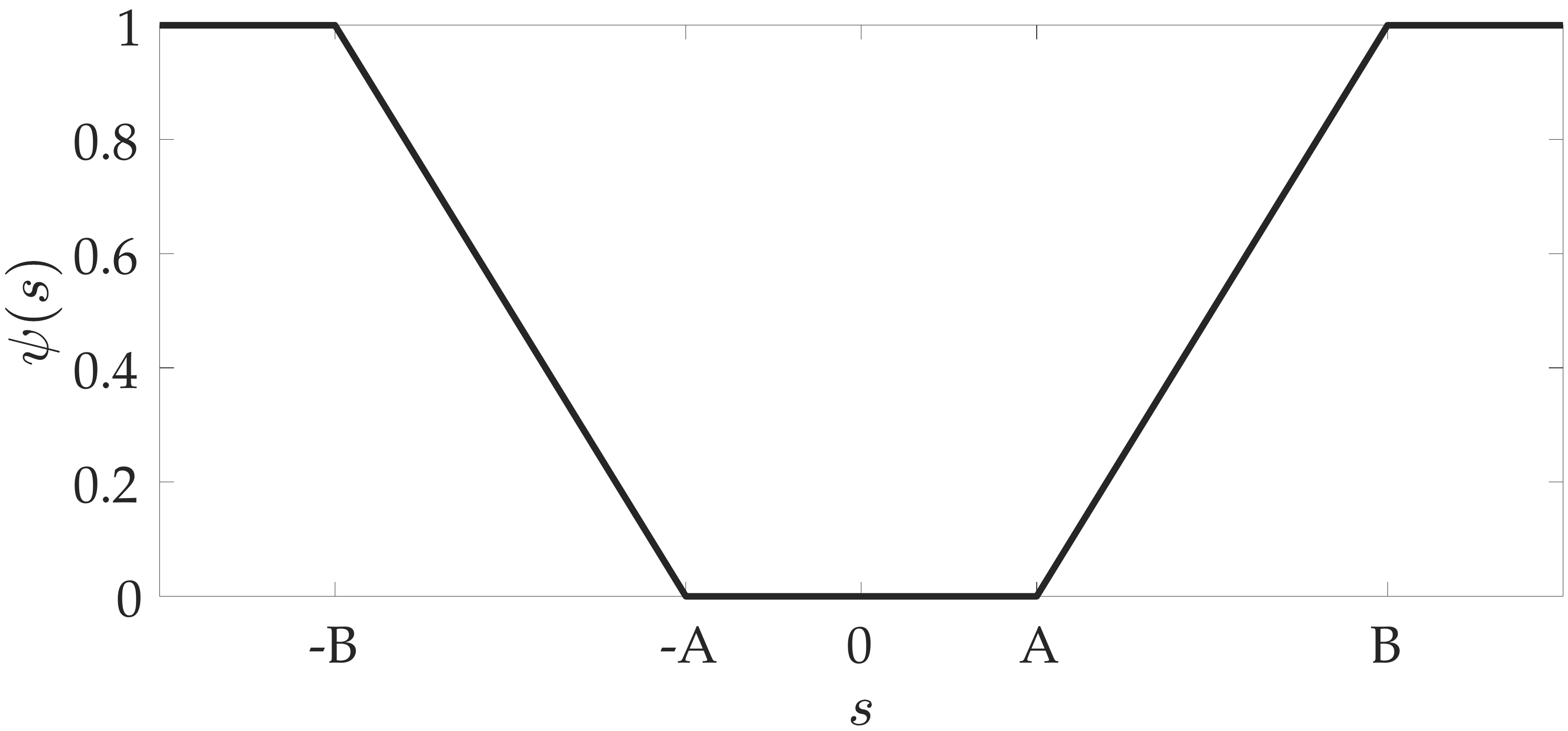}
	\vspace*{-22pt}
	\caption{Dampening factor for the Muddy Jumper environment. The regions that are inside $[-A,A]$ are not muddy, so the robot can move without any dampening. The outer regions have increasing amount of mud, which causes the robot's movements to be dampened. The robot becomes completely immobile outside $(-B,B)$.}
	\label{fig:dampening_factor}
	\vspace*{-10pt}
\end{figure}

With these, we can take $L_a=1$ and $L_s=\frac{C+B-A}{B-A}$. For our simulations, we used $A=3$, $B=9$ and $C=12$. We sample $\mathcal{S}$ as $\{-10, -9.8, \dots, 9.8, 10\}$ and $\mathcal{A}$ as $\{-12, -11.8, \dots, 11.8, 12\}$. Hence, the largest safe set is the interval between $-8.8$ and $8.8$. We set $S_0=[-3,3]$ and $s_0=0$. We present the results of our algorithm in Fig.~\ref{fig:muddy_jumper_results}.

\begin{figure}[h]
	\centering
	\includegraphics[width=\columnwidth]{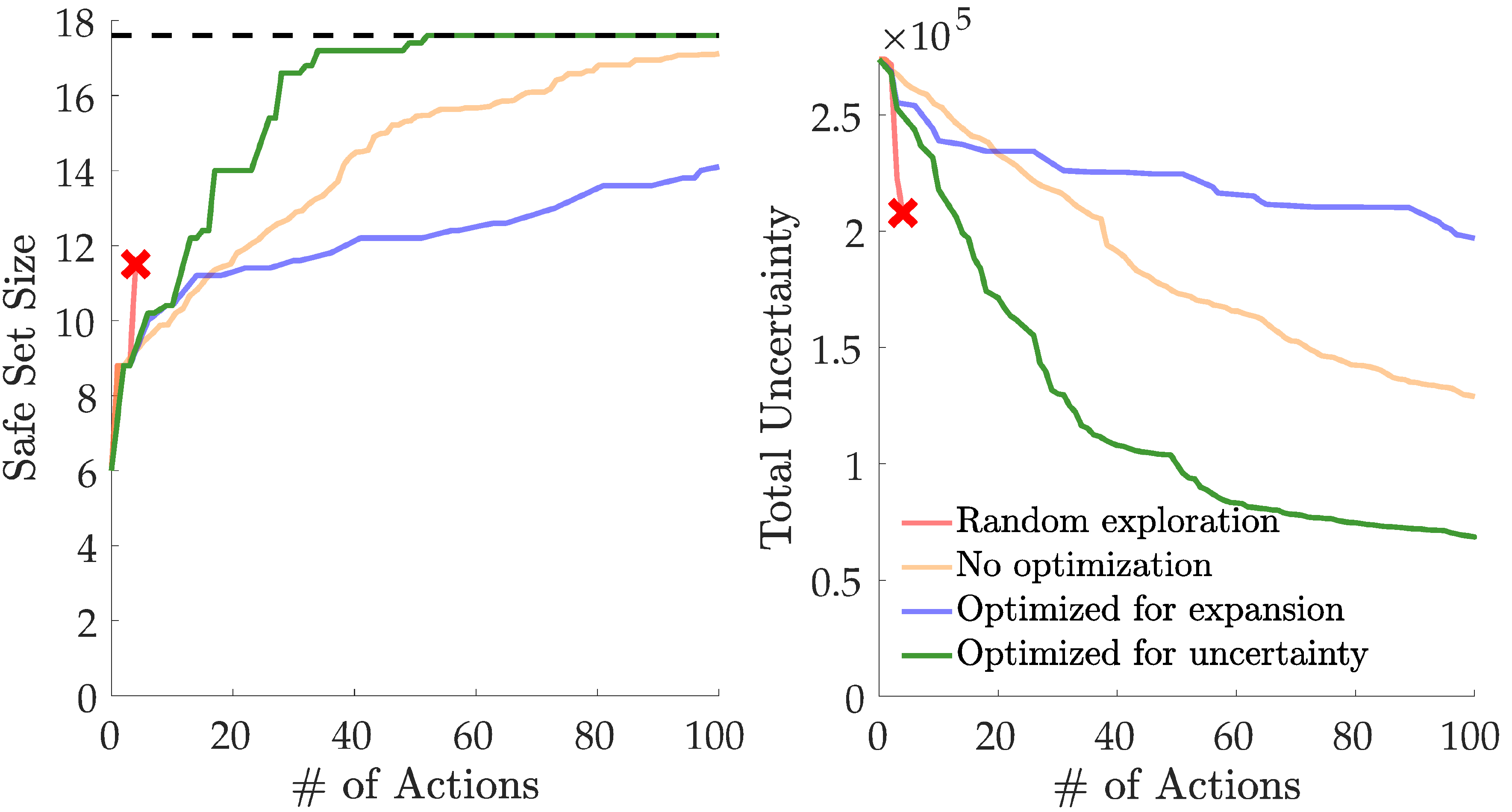}
	\vspace*{-16pt}
	\caption{Results of Muddy Jumper environment with our algorithm and the three other baselines. For safe exploration with no optimization, 10 runs have been averaged. The crosses at the end of random exploration lines indicate the system reached to an unsafe state. \textbf{Left.} Number of actions vs. safe set size. Dashed line shows the size of largest possible safe set. \textbf{Right.} Number of actions vs. total uncertainty.}
	\label{fig:muddy_jumper_results}
	\vspace*{-10pt}
\end{figure}

It can be seen that random exploration leads to fast uncertainty reduction, because the allowed actions include very large jumps. However more importantly, it terminates upon reaching unsafe states after only a few actions. Other exploration techniques tackle this problem by taking only safe actions. However, our algorithm outperforms the baselines in terms of efficiency. It is better than the safe exploration with no optimization as it optimizes the uncertainty reduction per action. It can be seen that optimizing for safe set expansion yields good results initially; however becomes highly suboptimal afterwards. In this case, the superiority of our algorithm can be due to that reducing overall uncertainty leads to larger safe set expansion in future iterations, whereas a greedy optimization for the expansion cannot achieve this. As a side note, it is also interesting to see that the safe exploration with no optimization outperformed the safe exploration optimized for safe set expansion in later iterations. This might be because optimizing for expansion may often get stuck at states that are not amongst the original state samples, so the optimization becomes only over the immediate next action.

\subsection{Hilly Jumper}
We simulate another environment with the following transition model:
\begin{align*}
f(s,a) = s + a - h'(s)
\end{align*}
where $a\in\mathcal{A}=[-C, C]$, $s\in\mathcal{S}=\mathbb{R}$, and $h(s)$ is an environment-dependent function. We simulated it as:
\begin{align*}
h(s) = \begin{cases}
-\frac{(s+A)^4}{4B^4}+\frac{(s+A)^2}{2B^2} & \text{if } s<-A\\
0 & \text{if } -\!A\leq s<A\\
-\frac{(s-A)^4}{4B^4}+\frac{(s-A)^2}{2B^2} & \text{otherwise}\\
\end{cases}
\end{align*}
Then, we have
\begin{align*}
h'(s) = \begin{cases}
-\frac{(s+A)^3}{B^4} + \frac{s + A}{B^2} & \text{if } s< -A\\
0 & \text{if } -\!A\leq s<A\\
-\frac{(s-A)^3}{B^4} + \frac{s - A}{B^2} & \text{otherwise}\\
\end{cases}
\end{align*}
This environment was inspired by a robot jumping on hilly ground where $h(s)$ is the elevation function. Both the elevation function and its derivative are plotted in Fig.~\ref{fig:elevation}. When $\abs{s} > A+B$ and $C < \abs{h'(s)}$, the robot is on a very steep terrain, so it cannot return to the safe set, which is around $0$-state.

\begin{figure}[h]
	\centering
	\includegraphics[width=\columnwidth]{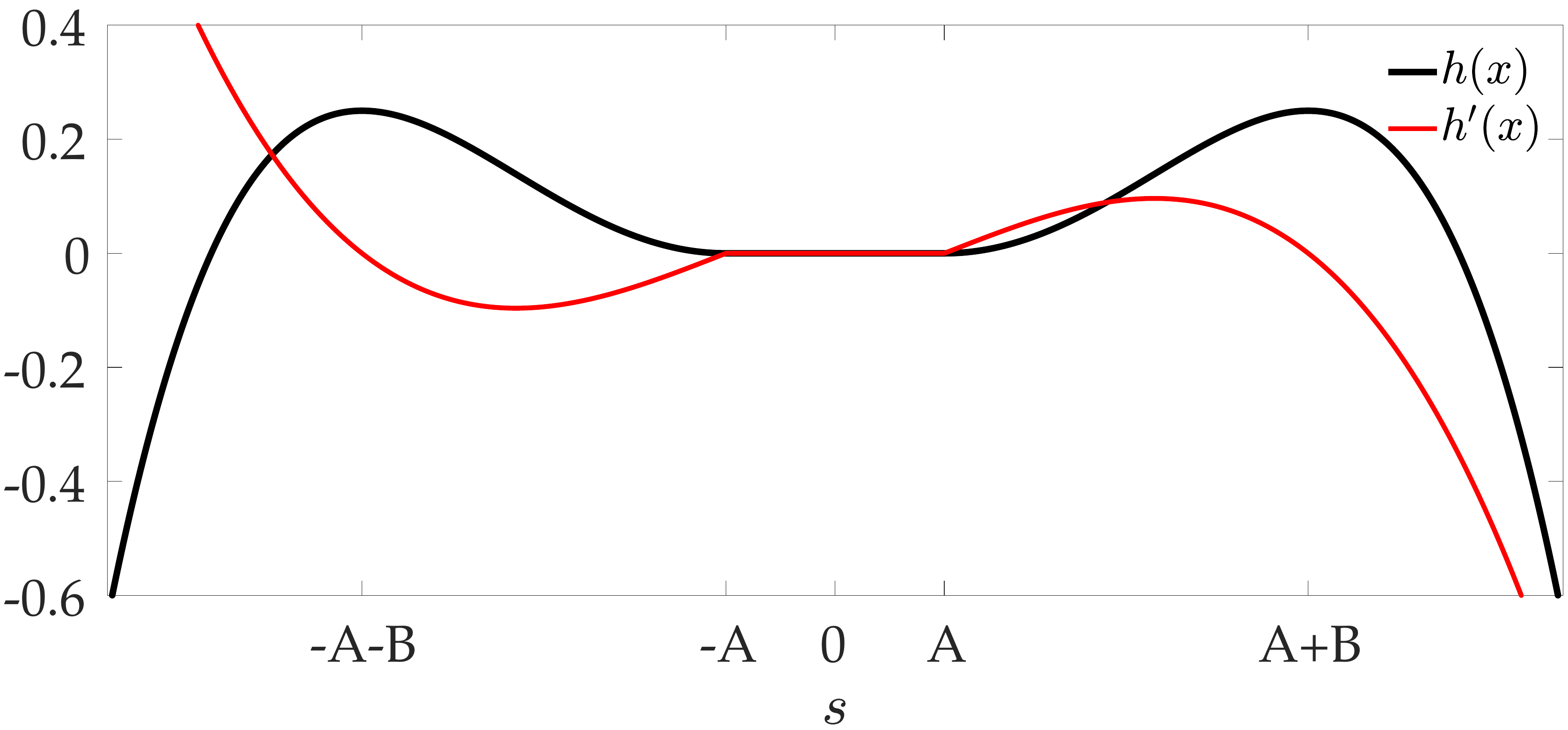}
	\vspace*{-22pt}
	\caption{Elevation function and its derivative are plotted for the Hilly Jumper environment. The robot has a constant maximum power, which prevents it from returning to the central region when it reaches points which are too steep.}
	\label{fig:elevation}
\end{figure}

With these, we can take $L_a=1$. While this problem does not have a single Lipschitz-continuity constant over all states, as the slope increases without bounds in each direction, it locally satisfies Lipschitz-continuity around the central region. For our simulations, we used $A=1.2$, $B=4$, $C=0.3$ and $L_s=1.4$. We sample $\mathcal{S}$ as $\{-6.9,-6.8,\dots,6.8,6.9\}$ and $\mathcal{A}$ as $\{-0.3,-0.2,\dots,0.2,0.3\}$. Hence, the largest safe set is the interval between $-6.6$ and $6.6$. We set $S_0=[-1.2,1.2]$ and $s_0=0$. We present the results of our algorithm in Fig.~\ref{fig:hilly_jumper_results}.

\begin{figure}[h]
	\centering
	\includegraphics[width=\columnwidth]{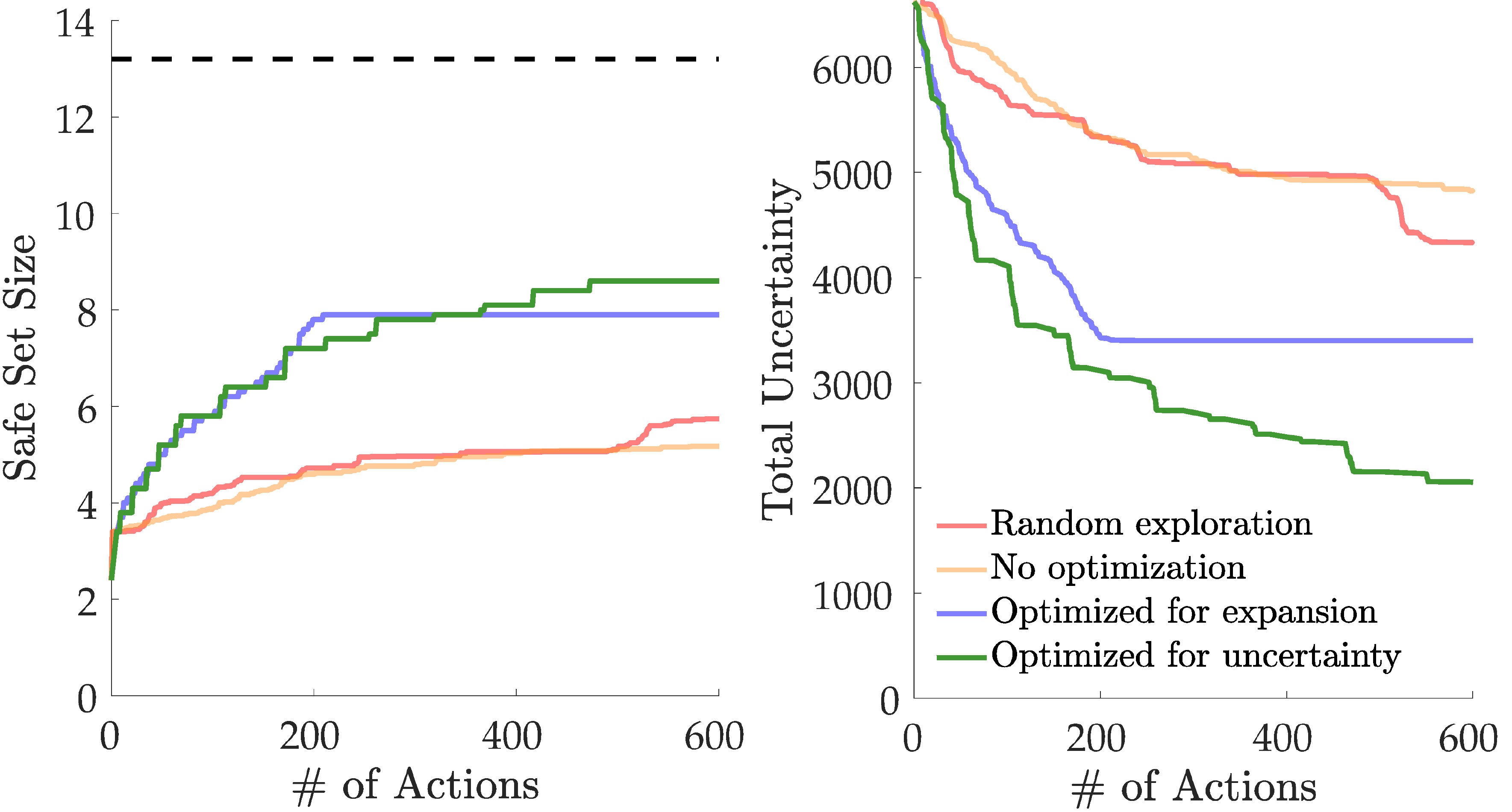}
	\vspace*{-12pt}
	\caption{Results of Hilly Jumper environment with our algorithm and the three other baselines. For both random exploration and safe exploration with no optimization methods, 10 runs have been averaged. \textbf{Left.} Number of actions vs. safe set size. Dashed line shows the size of largest possible safe set. \textbf{Right.} Number of actions vs. total uncertainty.}
	\label{fig:hilly_jumper_results}
\end{figure}

Unlike the Muddy Jumper environment, random exploration does not lead to very fast uncertainty reduction, because the action set contains only very short jumps. Due to the same reason, it does not crash \textemdash it is unlikely for the robot to leave the safe set with random small jumps. It is even harder due to the fact that the regions $A < \abs{s} < A+B$ have a slope towards the central region. Safe exploration with no optimization performs even worse than random exploration, because it enforces safety constraint.

Similar to Muddy Jumper experiments, the baseline which optimizes over safe set expansion initially gives good performance; and later becomes suboptimal. In this case, the reason is the following: After the algorithm expands the safe set up to $-6.2$ on the negative side, it cannot expand further due to the sampling of state space. It also could not expand the positive side further from $1.7$, because the algorithm gets stuck near the negative limit. This is because when the system is at a new state near $-6.2$, all actions have some level of uncertainty, so the optimization is only over the immediate actions. And when the system is near the limit, it does not leave that region; because if it finds some possibility of safe set expansion, it explores; if it cannot find a possible expansion, it still goes toward the boundary. In fact, we ran this algorithm up to $2000$ actions, and observed that the system was always in states $[6.10,6.20]$ after $220^{\text{th}}$ action. This baseline would require the following additional mechanisms to perform well: 1) to spot when to be confident that there is no possibility of expansion, and 2) to make sure the system moves to unexplored state regions when that confidence is obtained. 

On the other hand, our algorithm, safe exploration optimized for uncertainty reduction, outperforms all the baseline methods in terms of efficiency. It can be noted that none of the algorithms reaches the maximum expandable safe set within $600$ actions. While the explored safe set might be expanded further with more iterations, it is also limited due to the sampling of state space \textemdash denser sampling could increase this limit; however it causes a significant computational burden.

\section{Discussion}
\label{sec:discussion}
For safe exploration tasks, computational cost is a problem in general. Our algorithm has polynomial complexity in the number of states and actions. While the use of Gaussian processes enables faster computation, directly using Lipschitz-continuity makes the algorithm computationally heavier, though we do note that our algorithm is parallelizable. For reference, we initially sample 101 states, 121 possible actions in Muddy Jumper; and 139 states, 7 possible actions in Hilly Jumper. However, the number of states increases during algorithm execution as the system visits states that are not amongst the initial samples. Additionally, it can be a concern for low-memory systems that our framework requires the storage of $\mathcal{T}_i$ that grows linearly with the number of uncertain actions taken.

Both our example environments had 1-dimensional state spaces. We note that in higher dimensional problems, the number of states necessarily grows exponentially in the dimension of state space. In particular, the number of states on the boundary of the safe set at any step is likewise exponential in the dimension of the state space. Since our algorithm does not extrapolate from data in order to produce its safety guarantees, it by design must explore this exponentially sized safe set boundary.

For some specific applications, our algorithm's requirement that how to move from one state to another is known inside the initial safe set can be too restrictive. In such cases, our algorithm can be readily applied provided that there exist some uncertain but safe actions for each state in the initial safe set. While this may hurt efficiency, it will enable the use of our algorithm in broader configurations.

In this formulation, our algorithm is limited to deterministic environments. Further research could generalize it to stochastic MDPs and to systems with disturbances. Similarly, our framework requires prior knowledge of the Lipschitz continuity parameters $L_s$ and $L_a$. In settings where it is impractical to provide estimates of these parameters prior to running this algorithm, this algorithm could be modified to learn them online. However, either of these generalization would come at the expense of losing the algorithm's deterministic guarantees.

As long as Lipschitz-continuity assumptions can be made, our algorithm can be applied to both linear and nonlinear systems, as well as to systems where safe state set boundaries are very complex. We have demonstrated our algorithm on two simulated environments, and we are planning to design real robotics experiments to showcase our algorithm.

Lastly, in each iteration of our algorithm, we currently take only actions we are certain about before taking a final, uncertain, action that we learn from (see the first constraint in \eqref{eq:optimization}). This algorithm could potentially be improved by having it optimize over and learn from paths that include several uncertain actions in sequence rather than just one.

\section{Conclusion}
\label{sec:conclusion}
In this paper, we presented an algorithm to safely explore safety-critical deterministic MDPs that is efficient in terms of the number of actions it takes. Unlike some previous works, our algorithm does not require the transition function to be known a priori, except for some little prior knowledge. 

Future work will demonstrate our algorithm's use in practice. In addition, future work can be done to further improve the efficiency of our algorithm by allowing it to plan along sequences of multiple uncertain actions. We are also planning to relax the determinism requirement on the MDP and apply our algorithm to stochastic environments. And lastly, exploration is needed into combining this Lipschitz grounded approach with model-based approaches to handle higher dimensional state and action spaces.  

Finally, we will study different methods for transferring from a source (e.g., simulation) domain to a target (e.g., real-world) domain. In order for a robotic system to adapt to a new domain, the system must often explore the parameters of the new environment, but must also do so safely.  In the future work, we will leverage our work on safe exploration in MDPs and the Delaunay-based optimization \cite{alimo2017optimization} to address this problem. 
%We can call this problem as the exploration, exploitation, expansion tradeoff problem.

\section*{Acknowledgment}

The authors thank Fred Y. Hadaegh, Adrian Stoica and Duligur Ibeling for the discussions and support. The authors also gratefully acknowledge funding from Jet Propulsion Laboratory, California Institute of Technology, under a contract with the National Aeronautics and Space Administration in support of this work. Toyota Research Institute (``TRI") provided funds to assist the authors with their research but this article solely reflects the opinions and conclusions of its authors and not TRI or any other Toyota entity.

%%%%%%%%%%%%%%%%%%%%%%%%%%%%%%%%%%%%%%%%%%%%%%%%%%%%%%%%%%%%%%%%%%%%%%%%%%%%%%%%

\bibliographystyle{ieeetran}
\bibliography{refs}

\end{document}